\DeclareMathOperator*{\argmax}{arg\,max}
\newtheorem{proposition}{Proposition}
\newtheorem{definition}{Definition}
\begin{document}
%

\title{Using Bi-Directional Information Exchange to Improve Decentralized Schedule-Driven Traffic Control}  

\author{Hsu-Chieh Hu \and Stephen F. Smith\\
 Carnegie Mellon University\\ Pittsburgh, PA, USA\\
hsuchieh@andrew.cmu.edu,sfs@cs.cmu.edu
 }

\maketitle
\begin{abstract}  

Recent work in decentralized, schedule-driven traffic control has demonstrated the ability to improve the efficiency of traffic flow in complex urban road networks. In this approach, a scheduling agent is associated with each intersection. Each agent senses the traffic approaching its intersection and in real-time constructs a schedule that minimizes the cumulative wait time of vehicles approaching the intersection over the current look-ahead horizon. In order to achieve network level coordination in a scalable manner, scheduling agents communicate only with their direct neighbors. Each time an agent generates a new intersection schedule it communicates its expected outflows to its downstream neighbors as a prediction of future demand and these outflows are appended to the downstream agent's locally perceived demand. In this paper, we extend this basic coordination algorithm to additionally incorporate the complementary flow of information reflective of an intersection's current congestion level to its upstream neighbors. We present an asynchronous decentralized algorithm for updating intersection schedules and congestion level estimates based on these bi-directional information flows. By relating this algorithm to the self-optimized decision making of the basic operation, we are able to approach network-wide optimality and reduce inefficiency due to strictly self-interested intersection control decisions.

\end{abstract}

\section{Introduction}
Over half of the world's population now lives in cities and global urbanization continues at a steady pace. As this trend continues, urban mobility is becoming an increasingly critical problem. In the US cities alone, the cost of congestion now exceeds \$160 Billion in lost time and fuel consumption, and is responsible for release of an additional $50$ Billion pounds of $CO_{2}$ into the atmosphere \cite{schrank20152015}. It is commonly recognized that better optimization of traffic signals could lead to substantial reduction of congestion and travel days, yet how to optimize a large transportation network in a responsive but scalable way remains a problem that continues to attract researchers from different fields. 

In urban environments, traffic signal control is still dominated by the use of fixed timing plans, which are based on average traffic conditions, and quickly become outdated as flow characteristics evolve over time. To improve matters, centralized approaches that adjust signal timing plan parameters (e.g., cycle time, green time split offset between consecutive intersections) according to actual sensed traffic data \cite{Robertson1991,Lowrie1992,heung2005coordinated,gettman2007data} have been proposed. However, these approaches are designed to accommodate continuous gradual change in traffic patterns (typically adjusting parameters after integrating information for several minutes), and are not responsive to real-time traffic events and disruptions. Alternatively, decentralized online panning approaches have been proposed \cite{sen1997controlled,gartner2002optimized,shelby2001design,cai2009adaptive,jonsson2011scaling}. These approaches solve the problem of scalability in principle, but have historically had difficulty computing plans in real-time with a sufficiently long horizon to achieve network-level coordination.

A recent development in decentralized online planning that overcomes this horizon problem and is capable of real-time responsiveness is schedule-driven traffic signal control \cite{Xie2012,xie2012schedule}. The key idea behind this approach is to formulate the intersection scheduling problem as a single machine scheduling problem, where input jobs are represented by a sequence of clusters consisting of spatially adjacent vehicles (i.e., approaching platoons, queues). This aggregate representation enables efficient generation of longer horizon plans that incorporate multi-hop traffic flow information (i.e., the traffic flow information across multiple intersections) and thus network-wide coordination is achieved through exchange of schedule information. In operation, an intersection scheduling agent is associated with each intersection. The goal of each scheduling agent is to allocate green time to different signal {\emph phases} over time, where a signal phase is a compatible traffic movement pattern (e.g., East-West traffic flow). Each agent generates in real time a phase schedule (or signal timing plan) for its intersection that minimizes the cumulative delay of approaching clusters. To collaborate with other agents, at each decision point each agent receives a projection of expected outflows from its upstream neighbors and considers these additional clusters when it generates a phase schedule. After starting to execute its schedule, the resulting flows are communicated to its downstream neighbors. Scalability is ensured by the fact that scheduling agents only communicate with their direct neighbors. However, outflow information (i.e., approaching upstream clusters) can propagate to non-local neighbors since the look-ahead horizon is extended and replanning occurs frequently. Results obtained in field experiments have shown significant reductions in travel times, wait times and number of stops, as well as in projected emissions. \cite{smith2013smart}.


One potential limitation of this approach, however, stems from its reliance on one-way flow of demand information from upstream intersections to downstream intersections. In cases where downstream intersections are in fact already congested, uninformed inflow of additional vehicles toward this intersection can further exacerbate delay and/or miss opportunities to better move cross street traffic. In effect, local intersection scheduling agents are optimizing in a self-interested manner, albeit with greater visibility of future demand, and this myopic perspective can compromise network-level performance.

In this paper, we consider the possibility of improving network-level performance by augmenting the information exchanged between neighboring intersections to include complementary downstream to upstream flow of congestion information. Our goal is to use shared cost (i.e., waiting time of vehicles) to improve the decision-making of each scheduling agent. The idea is to use estimates of downstream congestion cost to influence selfish upstream decisions and with this more global perspective, increase overall social welfare (network performance). Decision-making with shared cost (reward) or states has been proven to be an effective way to improve multi-agent problem solving performance in other settings \cite{huang2006distributed,Yang2010}.

We propose an expanded bi-directional information exchange algorithm between intersections that combines forward communication of projected vehicle outflows to downstream intersections with backward communication of the estimated delay for each vehicle to upstream intersections as a prediction of next-hop costs. This additional information is incorporated by redefining the local intersection scheduling objective to include these costs. In situations where traffic is light, the feedback delay will be small and local intersection scheduling will proceed as before. However as the network becomes saturated and the cumulative delay of downstream neighbors becomes larger, the feedback cost will reflect this and lessen the number of vehicles that are sent downstream in this direction. To ensure scalability, messages continue to be exchanged only between direct neighbors and the asynchronous nature of local intersection scheduling is preserved.

The remainder of the paper is organized as follows. We first introduce the related work. Next, the problem formulation and the detailed algorithm necessary to achieve better coordination are presented. Then, an empirical analysis of the proposed approach is shown. Finally, the conclusions are drawn.

\section{Related Work}
A general review of all past intersection control schemes is beyond the scope of this paper; we refer readers to \cite{shelby2001design} and \cite{stevanovic2010adaptive} for more comprehensive overviews. As mentioned earlier, since decentralized control schemes have been explored as a means for increasing number of signals and detectors while maintaining real-time responsiveness, we briefly summarize several agent-based approaches that optimize traffic flow in a decentralized way. 

It is well established that agent-based approaches are well suited to the decentralized traffic management problem, given newly developed sensing technologies, historical temporal data, and the frequent and flexible interaction between the agents and their environment \cite{dresner2008multiagent,bazzan2014review}. A common approach related to control of traffic signals is to let multiple agents learn a policy for mapping states to actions by monitoring traffic flow and selecting actions. A Markov Decision Process (MDP) is a popular means for model this problem \cite{camponogara2003distributed}. Since the space of state-action pairs grows exponentially and also depends on discretization of states and number of intersections, scalability to actual traffic conditions may be problematic. Hence, instead of solving a large MDP, use of an independent learner can relax this problem. In \cite{da2006dealing}, model-based reinforcement learning is proposed to deal with dynamic non-stationary traffic flow, although it still lacks consideration of joint states and joint actions. Moreover, learning is a time-consuming task and imposes an overhead on real-time control. It is challenging to learn a policy to deal with all kinds of traffic conditions in real-time. Techniques of evolutionary game theory in which agents perform experimentation and receive a reward that depends on the neighbors is used in  \cite{bazzan2005distributed}. This approach becomes time-consuming when many different options of coordination are possible.

To deal with computational complexity of joint optimization, a recent trend is to let agents learn independently but allow them to interact with each others and combine their policies or plans. This provides a new trade-off between total centralization and total independence.  Exchange of information between a group of agents may increase accuracy and learning speed at the expense of communication \cite{nunes2004learning}. The work in \cite{kuyer2008multiagent} also focuses on exchanging information to benefit reinforcement learning and explicit coordination among agents through a coordination graph. However, this approach leads to an increase in complexity as the graph becomes larger. 

In the field of planning, exchanging information to extend the horizon is considered in  \cite{sen1997controlled,gartner2002optimized,Xie2012,xie2012schedule} as a way to accommodate non-local information. In addition, communication with more accurate information has been shown to be effective for multi-agent online planning  \cite{wu2009multi}.

\section{Problem Formulation}
We start by introducing scheduling model and notations of schedule-driven traffic control and then define the network-level online planning problem that provides the objective pursued in this paper for better network-level coordination. 
\subsection{Schedule-Driven Traffic Control}
As indicated above, the key to the single machine scheduling problem formulation of the schedule-driven approach of \cite{Xie2012,xie2012schedule} is an aggregate representation of traffic flows as sequences of clusters $c$ over the planning (or prediction) horizon. Each \textit{cluster} $c$ is defined as $(|c|, arr, dep)$, where  $|c|$, $arr$ and $dep$ designate number of vehicles, arrival time of the cluster at the intersection, and its departure time respectively. Vehicles entering an intersection are clustered together if they are traveling within a pre-specified interval of one another. The clusters become the jobs that must be sequenced through the intersection (the single machine). Once a vehicle moves through the intersection, it is sensed and grouped into a new cluster by the downstream intersection.The sequences of clusters provide short-term variability of traffic flows at each intersection and preserve the non-uniform nature of real-time flows. Specifically, the \textit{road cluster sequence} $C_{R,m}$ is a sequence of $(|c|, arr, dep)$ triples reflecting each approaching or queued vehicle on entry road segment $m$ that is ordered by increasing $arr$. Since it is possible for more than one entry road to share the intersection in a given \textit{phase} (a phase is a compatible traffic movement pattern, e.g., East-West traffic flow), the \textit{input cluster sequence} $C$ can be obtained through combining the road cluster sequences $C_{R,m}$ that can proceed concurrently through the intersection. The travel time on entry road $m$ defines a finite horizon ($H_m$), and the prediction horizon $H$ is the maximum over all roads.

Each cluster in an approaching road segment is viewed as a non-divisible job and a forward-recursion dynamic programming search is executed in a rolling horizon fashion to continually generate a phase schedule that minimizes the cumulative delay of all clusters. A new phase schedule is generated once a second for purposes of reducing uncertainty associated with clusters and queues. The process constructs an optimal sequence of clusters that maintains the ordering of clusters along each road segment, and each time a phase change is implied by the sequence, then a delay corresponding to the intersection's yellow/all-red changeover time constraints is inserted.  If the resulting schedule is found to violate the maximum green time constraints for any phase (introduced to ensure fairness), then the first offending cluster in the schedule is split, and the problem is re-solved.

Formally, the resulting \textit{control flow} can be represented as a tuple $(S, C_{CF})$ shown in Figure~\ref{cf}, where $S$ is a sequence of phase indices, i.e., $(s_1, \cdots, s_{|S|})$, $C_{CF}$ contains the sequence of clusters $(c_1, \cdots, c_{|S|})$ and the corresponding starting time after being scheduled. More precisely, the delay that each cluster contributes to the cumulative delay $\sum_{k = 1}^{|S|} d(c_{k})$ is defined as 
\begin{equation}
d(c_{k}) = |c_{k}| \cdot (ast - arr(c_{k})),
\label{orig_delay}
\end{equation} 
where $ast$ is the actual start time that the vehicle is allowed to pass through, which is determined by the optimization process. The optimal sequence (schedule) $C_{CF}^*$ is the one that incurs minimal delay for all vehicles.

 \begin{figure}[!htbp]
\centering
\includegraphics[scale = 0.35]{./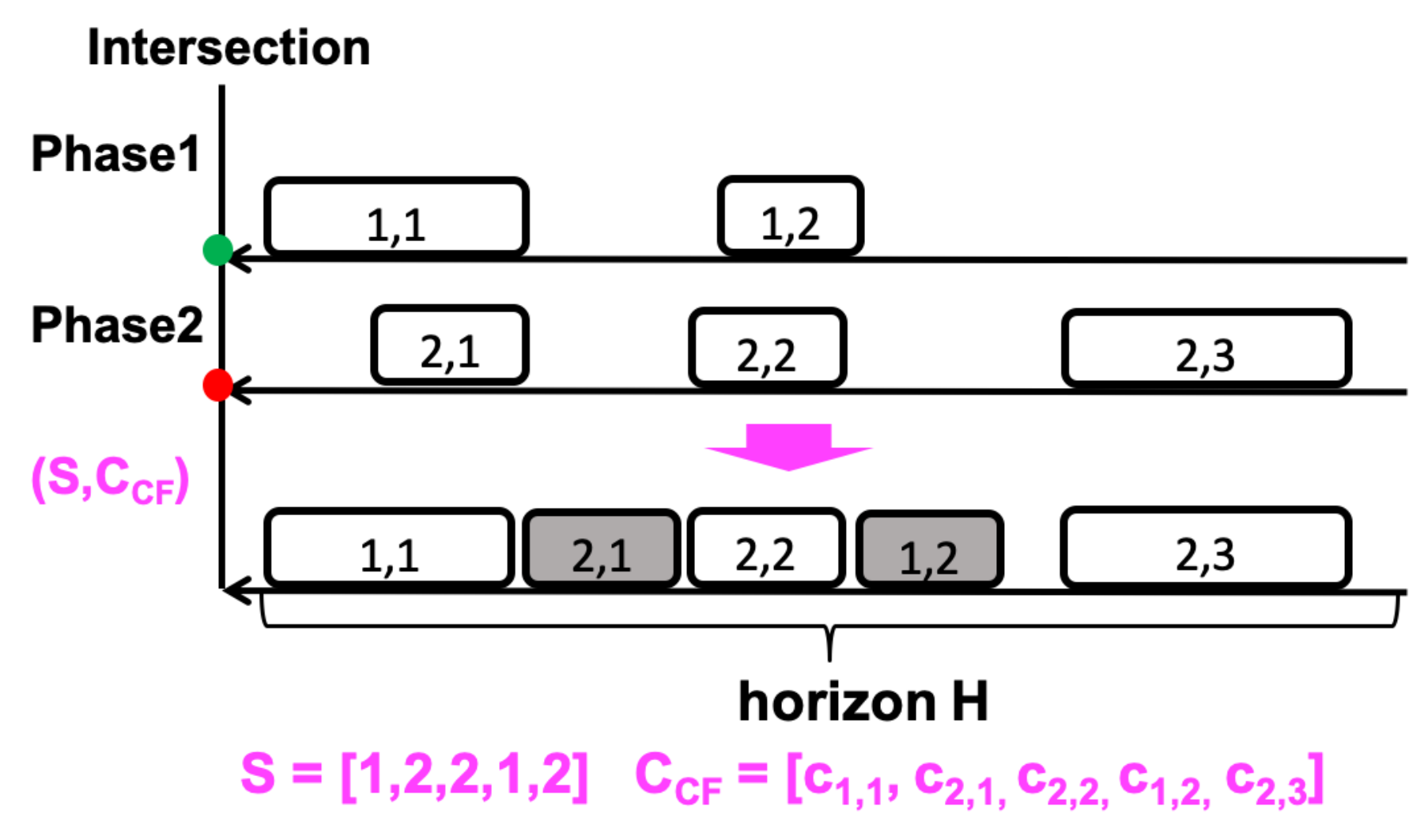}
\caption{The resulting control flow $(S, C_{CF})$ calculated by scheduling agents: each block represents a vehicular cluster of \textit{input cluster sequence} $C$, which combines the road cluster sequences $C_{R,m}$. For instance, $(2,1)$ represents the first cluster at phase $2$. The shaded blocks of $C_{CF}$ represent the delayed clusters.}
\label{cf}
\end{figure}

To collaborate with neighbor intersections, each intersection receives a projection of expected outflows from its upstream neighbors and plugs it into its local computation. After starting to execute its schedule, the resulting flows are communicated to its downstream neighbors. Since a vehicle may enter into/leave from intersection via different road segments, the clusters that are propagated to neighbors over extended look-ahead horizon $H$ are split and weighted by turning movement proportion. Thus, the weight $|c|$ of the non-local cluster will be a fractional number to reflect the uncertainty of movement. The turning movement proportion data is estimated by taking average of traffic flow rates for different phases. All approaching vehicles are sensed through the intersection's lane detectors.

\subsection{Network-Level Online Planning Problem}
As mentioned earlier,  our hypothesis is that the effectiveness of this schedule-driven process is restricted by the fact that as each scheduling agent aims to optimize its own cumulative delay without regard to the cost it imposes on others. To formulate the problem, we model a transportation network by a graph $G = \{V,E\}$, where vertex $v \in V$ is an intersection and $e \in E$ is the road segment connecting the intersections. Since schedule-driven traffic control is an online planning approach,  overall performance can be formulated as the sum of the following coupled objective that is continually re-optimized at each replanning time $t$ for the current prediction horizon $H$:
\begin{equation}
\min_{\{C_{CF,i}(t), i \in V\}}\sum_{i \in V} f_i(C_i(t), C_{-i}(t)), 
\label{overall_perf}
\end{equation}
where $C_i(t)$ and  $C_{-i}(t)$ are the local cluster sequences of approaching vehicles at intersection $i$ and intersections other than $i$ respectively, $C_{CF,i}$ determines $ast$ (actual start times) of  the input clusters and thus how local clusters propagate to downstream, and $f_i(C_i(t), C_{-i}(t)) = \sum_{k = 1}^{|S|} d(c_{k})$ is the cumulative delay of intersection $i \in V$ given the schedules of all intersections except $i$,
\begin{equation}
C_{-i}(t)=(C_1(t), \cdots, C_{i-1}(t), C_{i+1}(t),\cdots,C_{|V|}(t)).
\end{equation}
Note that the cumulative delay at intersection $i$ is not merely determined by the local clusters $C_i$ but also the propagated $C_{-i}$ (i.e., outflow information) sent by other intersections within $H$.  However, due to the combinatorial nature of the scheduling problem, solving this network-wide scheduling problem exactly is computationally intractable, especially if the horizon $H$ is extended sufficiently by including flow information from multiple intersections and there are many intersections to coordinate. Here, we consider a formulation that can be solved in a decentralized way with only communication of direct neighbors and their local clusters.

\begin{definition}[Overall Performance with a Finite Horizon]
Assuming that the indirect impact of an intersection schedule to intersections that are two or more hops away is negligible through a finite horizon $H$, we have the following optimization problem:
\begin{equation}
\min_{\{C_{CF,i}(t), i \in V\}}\sum_{i \in V} f_i(C_i(t), C_{\mathcal{N}_i}(t)), 
\label{relax_perf}
\end{equation}
\end{definition}
where $\mathcal{N}_i$ is the set of direct neighbors of intersection $i$ and $C_{\mathcal{N}_i}(t)$ are the scheduled clusters sent by these neighbors. If a longer look-ahead horizon is allowed, more intersections can be inserted into the set $\mathcal{N}_i$. Under light traffic conditions, (\ref{relax_perf}) hints that a good approximate solution is one where each agent optimizes its local objective greedily, since less traffic is created toward others. Considering the performance of this schedule-driven process in a network that is experiencing high congestion, however, the coupling of traffic across intersections is dominant in this delay computation. The remedy is to bias the scheduling search more toward reducing joint delay across neighboring intersections as the level of local congestion increases. 

\section{Bi-Directional Information Exchange}
 In this section, we introduce a distributed algorithm for calculating the schedule of a given intersection, so that its results are better coordinated with the schedules of neighboring intersections. In brief, we propose an asynchronous decentralized algorithm in which agents generate a harmonized joint timing plan through reciprocal exchange of downstream congestion cost information in addition to exchanged schedule outflow information. With this extra information, the intractable network-level optimization problem can be approximated by locally planning according to a modified objective that incorporates this information.

\subsection{Congestion Feedback}
As mentioned earlier, the control efficiency of a signalized network not only depends on how a single intersection allocates green time efficiently but also is affected by how much traffic it imposes on others. According to schedule-driven traffic control, the agent is able to make the optimal decision based on the observed approaching vehicles within a finite horizon. To push the boundary of performance further, we incorporate next-hop delay into the optimization.

To estimate the next-hop delay, we need to divide the control flow $C_i(t)$ according to the corresponding phases. For each intersection with a set of entry and exit roads, traffic on a given exit road is sent to the downstream neighbor that corresponds to that traffic phase. The traffic light cycles through a fixed sequence of phases $P$, and each phase $p \in P$ governs the right of way for a set of compatible movements from entry to exit roads. Therefore, the sequence $C_{CF,i}(t)$ at intersection $i$ can be decomposed into $|P|$ sub-sequences $(C_{1,i}(t),\cdots,C_{|P|,i}(t))$, where $C_{p,i}(t)$ contains clusters $(c_{p,1},\cdots, c_{p,|S_p|})$ with the right of way during phase $p$ and $S_p$ designates indices of clusters.

To illustrate the idea of incorporating next-hop delay into computation, the overall performance (\ref{relax_perf}) is rewritten in terms of intersection $i$ as
\begin{equation}
f_i(C_i(t), C_{\mathcal{N}_i}(t)) + \sum_{j \neq i} f_j(C_j(t), C_{\mathcal{N}_j}(t)).
\label{price_obj}
\end{equation}

Specifically, (\ref{price_obj}) is viewed as an approximation of the global objective (\ref{overall_perf}) for the intersection $i$, so that minimizing (\ref{price_obj}) biases the local decision toward better social welfare. If we assume that outflow information $C_{\mathcal{N}_i}(t-1)$ and others' schedule $C_{CF,j}(t-1), j \neq i$ are received at time $t$ by intersection $i$, i.e., each agent is an information taker and ignores any immediate influence it has on this information, each intersection $i$ solves the following problem to approach optimal social welfare:
\begin{align}
\min_{C_{CF,i}(t)} \quad&f_i(C_i(t), C_{\mathcal{N}_i}(t-1)) \nonumber\\
+ &\sum_{j \neq i} f_j(C_j(t-1), \{C_{\mathcal{N}_j\backslash i}(t-1),C_i(t)\}),
\label{local_price_obj}
\end{align}
where $\mathcal{N}_j\backslash i$ denotes neighbor intersections of $j$ except $i$. The control flow $C_{CF,i}(t)$ decides the $ast$ (actual start time) of $(C_i(t), C_{\mathcal{N}_i}(t-1))$ and thus the $arr$ (arrival time) of $C_i(t)$ at downstream intersections, where $C_i(t)$ is used to represent both the input cluster sequence at intersection $i$ and the outflow information received by other intersections.  The problem can be further simplified by removing irrelevant terms to $C_i(t)$.
\begin{proposition}[Biased Local Objective]
 Since the $C_i(t)$ only exists in the local objective of $\mathcal{N}_i$, minimizing (\ref{local_price_obj}) at time $t$ is equivalent to solving 
\begin{align}
\min_{C_{CF,i}(t)}\quad &f_i(C_i(t), C_{\mathcal{N}_i}(t-1)) \nonumber\\ 
+ &\sum_{j \in \mathcal{N}_i} f_j(C_j(t-1), \{C_{\mathcal{N}_j\backslash i}(t-1),C_i(t)\}).
\label{final_price_obj}
\end{align}
\end{proposition}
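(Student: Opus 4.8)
The plan is to argue entirely by tracking which summands depend on the optimization variable and discarding the rest; no real computation is involved. The decision variable in both (\ref{local_price_obj}) and (\ref{final_price_obj}) is the control flow $C_{CF,i}(t)$, and by the discussion preceding the statement it enters the objective \emph{only} through $C_i(t)$: fixing $C_{CF,i}(t)$ fixes the actual start times of the local input sequence and thereby determines the outflow $C_i(t)$ that intersection $i$ broadcasts. Every other quantity appearing in (\ref{local_price_obj}) --- namely $C_{\mathcal{N}_i}(t-1)$, each $C_j(t-1)$ for $j \neq i$, and each $C_{\mathcal{N}_j \backslash i}(t-1)$ --- carries the time stamp $t-1$ and is held fixed during the minimization performed at time $t$. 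So it suffices to show that the only terms of $\sum_{j \neq i}$ that genuinely depend on $C_i(t)$ are those indexed by $j \in \mathcal{N}_i$.

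First I would split the index set $\{j : j \neq i\}$ into the direct neighbors $\mathcal{N}_i$ and the remainder $V \setminus (\mathcal{N}_i \cup \{i\})$. For the remainder I would invoke the finite-horizon locality built into the Definition of overall performance: within a single horizon $H$ the outflow released by $i$ reaches only its direct downstream neighbors, every one of which lies in $\mathcal{N}_i$. Consequently $f_j$ can depend on $C_i(t)$ only when $j \in \mathcal{N}_i$. For each $j \notin \mathcal{N}_i$ the intersection $i$ is not among the neighbors feeding $j$, so $\mathcal{N}_j \backslash i = \mathcal{N}_j$ and the listed second argument collapses,
\[
f_j(C_j(t-1), \{C_{\mathcal{N}_j \backslash i}(t-1), C_i(t)\}) = f_j(C_j(t-1), C_{\mathcal{N}_j}(t-1)),
\]
which no longer contains $C_{CF,i}(t)$.

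Finishing is then immediate: every summand with $j \notin \mathcal{N}_i$ is constant with respect to the minimization over $C_{CF,i}(t)$, so the objectives of (\ref{local_price_obj}) and (\ref{final_price_obj}) differ only by this additive constant, and adding a constant leaves the set of minimizers unchanged. I expect the sole subtlety --- rather than any algebra --- to lie in the second step: one must state explicitly that appending ``$C_i(t)$'' inside $f_j$ for a \emph{non}-neighbor $j$ is vacuous, i.e. that by construction $f_j$ reads only the cluster streams arriving from $j$'s own direct neighbors and silently ignores a stream tagged to an intersection not among them. This is precisely the one-hop coupling hypothesis of the Definition, so invoking it closes the argument with no further work.
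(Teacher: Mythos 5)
Your proof is correct and follows essentially the same route as the paper's: the paper's one-line argument is exactly that, by the one-hop coupling in the finite-horizon formulation (\ref{relax_perf}), $C_i(t)$ appears only in $f_i$ and in $f_j$ for $j \in \mathcal{N}_i$, so all other summands are constants under the minimization. Your version merely spells out the index-set split, the collapse of the non-neighbor terms, and the additive-constant conclusion that the paper leaves implicit.
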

\begin{proof}
By Equation (\ref{relax_perf}), $C_i(t)$ only exists in the $f_i$ and $f_j, j \in \mathcal{N}_i$.
\end{proof}
From (\ref{final_price_obj}), the second term considers the number of vehicles sent to neighbors according to $C_i(t)$. More specifically, the possible delay of sent vehicles at intersections other than $i$ should be taken into account if the scheduling agent of intersection $i$ attempts to compute a schedule $C_{CF, i}(t)$ toward social welfare.  For instance, if intersection $j$ is the next-hop of $c_{p,k}$ in the direction of phase $p$, only $C_{p,i}(t)$ can contribute to the term $f_j(C_j(t), C_{\mathcal{N}_j}(t))$ in (\ref{price_obj}). Basically, solving (\ref{final_price_obj}) improves overall delay performance compared to baseline schedule-driven approach that solves local objective individually, as shown in the Proposition \ref{optimal_proof}.
\begin{proposition}[Reduce two hop delay]
To any vehicles, the cumulative delay of passing through two consecutive intersections is improved by solving (\ref{final_price_obj}) compared to minimizing local objective $f_i(C_i(t), C_{\mathcal{N}_i}(t-1))$ independently, i.e., baseline approach, given previous neighbor information.
\label{optimal_proof}
\end{proposition}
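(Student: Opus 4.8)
The plan is to observe that the objective minimized in (\ref{final_price_obj}) is, under the finite-horizon negligibility assumption of the Definition, exactly the cumulative delay accrued by vehicles over two consecutive hops, so that the proposition reduces to a one-line optimality argument. Write $g_i(C_{CF,i}(t))$ for the full objective minimized in (\ref{final_price_obj}), and let $C_{CF,i}^{b}(t)$ and $C_{CF,i}^{\ast}(t)$ denote the schedules that minimize, respectively, the baseline objective $f_i(C_i(t), C_{\mathcal{N}_i}(t-1))$ alone and the biased objective $g_i$, over the common feasible set of control flows at time $t$.

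First I would argue that $g_i$ genuinely measures two-hop delay. By the preceding Biased Local Objective proposition, the decision variable $C_{CF,i}(t)$ — equivalently the outflow $C_i(t)$ it induces — enters the relaxed global objective (\ref{relax_perf}) only through $f_i$ and through $f_j$ for $j \in \mathcal{N}_i$; every term for intersections two or more hops away is constant with respect to $C_{CF,i}(t)$ and may be dropped. The term $f_i$ is the first-hop delay incurred at $i$, while the phase decomposition $C_{CF,i}(t) = (C_{1,i}(t), \dots, C_{|P|,i}(t))$ routes the clusters $C_{p,i}(t)$ leaving on phase $p$ to the corresponding downstream neighbor $j$, where they contribute the second-hop delay captured by $f_j$. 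Under the information-taker assumption, the previously received quantities $C_j(t-1)$ and $C_{\mathcal{N}_j\backslash i}(t-1)$ are fixed, so each $f_j$ depends on the decision only through the turning-proportion-weighted outflow $C_i(t)$; hence $g_i$ is precisely the total delay accrued by the vehicles across $i$ and its immediate successors.

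Next I would close the argument by optimality. Both the baseline and the biased problems optimize the single variable $C_{CF,i}(t)$ subject to the same scheduling constraints (cluster ordering along each road, yellow/all-red changeover insertions, and maximum-green splitting), so they share one feasible set; in particular the baseline minimizer $C_{CF,i}^{b}(t)$ is feasible for the biased problem. Minimality of $C_{CF,i}^{\ast}(t)$ then gives $g_i(C_{CF,i}^{\ast}(t)) \le g_i(C_{CF,i}^{b}(t))$, i.e. the cumulative two-hop delay under the biased schedule never exceeds that under the baseline schedule, with strict inequality whenever the minimizer of $f_i$ alone is not already a minimizer of $g_i$ — the regime in which the downstream term is nonconstant, i.e. when the neighbors are congested.

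The hard part will not be the optimality step but the modeling justification in the second paragraph: I must verify that $g_i$ accounts for every vehicle's two-hop delay and nothing else. The delicate points are (i) that the finite-horizon assumption truly permits discarding the beyond-two-hop terms, so that $g_i$ is the actual two-hop cost rather than merely a bound; (ii) that the phase-wise routing of outflows to the correct neighbor $j$ is consistent with how $C_i(t)$ appears inside each $f_j$; and (iii) that the claimed improvement is read in the aggregate (summed) sense of (\ref{orig_delay}), since minimizing a sum need not reduce the delay of every individual cluster. Once these are pinned down, the improvement is immediate from the fact that the baseline optimizes a strict sub-part of the objective that the biased scheme optimizes in full.
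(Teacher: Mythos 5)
Your proposal is correct and follows essentially the same route as the paper: the paper's (one-sentence) proof likewise rests on observing that the objective in (\ref{final_price_obj}) \emph{is} the aggregate two-hop cumulative delay of all vehicles through intersection $i$ and its neighbors, so its minimizer trivially achieves no worse a value than the baseline schedule. Your write-up simply makes explicit what the paper leaves implicit — the common feasible set, the optimality comparison $g_i(C_{CF,i}^{\ast}) \le g_i(C_{CF,i}^{b})$, and the caveat that the improvement holds in the summed sense rather than per vehicle — which is a welcome clarification, not a different argument.
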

\begin{proof}
From (\ref{final_price_obj}), the summation of cumulative delay for all vehicles $v\in C_i(t), C_{\mathcal{N}_i}(t-1)$ to pass through intersection $i$ and the corresponding $\mathcal{N}_i$ is minimum.
\end{proof}
However, computing actual next-hop delay is unpractical due to the nature of the combinatorial problem. Intuitively, the contribution of $C_{p,i}(t)$ can be estimated by the average delay of sent vehicles in phase $p$. We introduce a feedback, called \textit{congestion feedback} and denoted by $\hat{d}[C_{p,j}(t-1)]$, to quantify this contribution. Through the cluster representation of schedule-driven traffic control, we have an intuitive way to estimate $\hat{d}[C_{p,j}(t-1)]$
\begin{definition}[Congestion Feedback]
Intersection $j$ computes its average delay of phase $p$ and sends to its neighbor corresponding to phase $p$. Then, we can define congestion feedback sent from intersection $j$ by
\begin{equation}
\hat{d}[C_{p,j}(t-1)] =\frac{ \sum_{c_{p,k}\in C_{p,j}(t-1)} d(c_{p,k})}{\sum_{c_{p,k}\in C_{p,j}(t-1)} |c_{p,k}|}.
\label{congestion_feedback}
\end{equation}
\end{definition}
The numerator is the total cumulative delay in phase $p$, and the denominator is the total number of vehicles in that phase. $\hat{d}[C_{p,j}(t-1)]$ is the estimated next-hop delay of $c_{p,k}$ for each vehicle at intersection $j$ according to control flow $C_{p,j}(t-1)$ at the previous time step.  Using the notion of congestion feedback, we can propose a new version of delay for each cluster at the intersection $i$ that incorporates the cost it imposes on others:
\begin{definition}[Augmented Delay]
The next hop of $c_{p,k}$ is intersection $j$. Then, its two hop delay can be represented as
\begin{equation}
d(c_{p,k}) = |c_{p,k}| \cdot \big[(ast - arr(c_{p,k})) + \hat{d}[C_{p,j}(t-1)]\big],
\label{aug}
\end{equation}
where  $c_{p,k}\in C_{p,i}(t)$.
\end{definition}
We generate schedule that minimizes summation of the augmented delay (\ref{aug}) (i.e., cumulative delay) other than the original delay (\ref{orig_delay}).


(\ref{congestion_feedback}) can serve as an accurate predictor of next-hop delay since it is based on replanning at the previous time step. If the granularity of the replanning is every second or even a smaller time unit, the traffic condition should not shift too drastically.   By introducing (\ref{congestion_feedback}) in each phase, the number of vehicles corresponding to a specific phase can be adjusted within the finite horizon $H$. Larger next-hop delay implies that sending more vehicles to a congested intersection would increase overall cumulative delay with a higher probability. The reduction of overall performance could be dominant compared to the increment of local objective. If the next-hop delay is small, which means that the traffic of neighbors are light, the schedule is similar to the original unbiased one.  The integration of next-hop delay motivates the decentralized algorithm described below. 


\subsection{Decentralized Congestion Compensation}
In this section, we present how to combine forward communication of projected vehicle outflows with backward communication of congestion feedback to coordinate a signalized network. The backward congestion feedback reflects compensation for imposing traffic on downstream traffic. At the beginning, each intersection announces its $|P|$ congestion cost measures to its upstream neighbors corresponding to different phases, and each neighbor factors the cost it receives into the computation of its schedule as described in Figure~\ref{cf_graph}.  After collecting all bi-directional information, intersection $i$ computes schedule according to
\begin{equation}
C_{CF,i}(t) = \argmax_{\hat{C}_{CF,i} = (\hat{C}_{1,i},\cdots, \hat{C}_{|P|,i})}\sum_{p = 1}^{|P|}\sum_{c_{p,k}\in \hat{C}_{p,i}} d(c_{p,k}).
\label{up_c}
\end{equation}
Each intersection then updates its congestion feedback according to  (\ref{congestion_feedback}). In this model, the cost and schedule are asynchronously updated. The decentralized congestion compensation (DCC) algorithm is given as follows 

\scalebox{0.9}{
\begin{minipage}{\columnwidth}
\begin{algorithm}[H]
\floatname{algorithm}{The DCC Algorithm}
\renewcommand{\thealgorithm}{}
\caption{Steps defining how intersection $i$ communicates to its downstream neighbors to achieve "social welfare" of the network}
\label{protocol1}
\begin{algorithmic}[1]
\State \textbf{Initialization}: For intersection $i \in V$ generate a initial schedule $C_i(0)$ and set the congestion feedback to $0$. 
\State   \textbf{Receive congestion feedback and outflow information}: At each time $t$, intersection $i$ receives congestion feedback from downstream of $j \in \mathcal{N}_i$, which is
$\hat{d}[C_{p,j}(t-1)] $, and schedule (outflow information) from upstream of $j \in \mathcal{N}_i$.
\State \textbf{Forward-recursion  dynamic programming search}: Intersection $i$ computes its schedule $C_i(t)$ according to equation (\ref{up_c}).
\State  \textbf{Feedback congestion feedback and outflow information}: According to equation (\ref{congestion_feedback}), intersection $i$ calculates $\hat{d}[C_{p,i}(t)]$  and schedule and shares them with upstream and downstream neighbors. Return to step 2.
\end{algorithmic}
\label{protocol}
\end{algorithm} 

\end{minipage}
}\\

In the DCC algorithm, it can be seen that to implement those updates, each intersection $i$ needs to know only: 1) its traffic flow, i.e., all approaching clusters within the horizon $H$ and 2) the neighbor congestion feedback. Although congestion feedback is computed based on neighbor's previous schedule, we assume that online planning with replanning frequently (e.g., every second) can resolve this freshness problem and generate an accurate prediction of future traffic.


\subsection{Outdated Information Prevention}
Since (\ref{congestion_feedback}) and (\ref{aug}) of the previous section are in a recursive form, the information propagated from distant intersections could be embedded in the congestion feedback. Although such multi-hop information could reflect the traffic conditions of other intersections in a certain sense, this information may be outdated. For instance, the congestion information embedded in the feedback may imply a clogged neighbor 5 minutes ago, but the traffic is already cleared when the information arrives at the current local intersection. 

Other than (\ref{aug}), there may be other ways to combine these delay quantity, e.g., weighted average. However, numerical results show that the performance is similar if intersections only share their actual local delay to neighbor intersections rather than a composite multi-hop delay by different combining methods. It can be seen that there is a trade-off between "freshness" and propagation distance.  Uncertainty marginalizes the advantage of including more information in the optimization.

In order to reduce complexity of real-time system and avoid the aged information problem, the local delay is plugged into (\ref{congestion_feedback}) instead. We maintain two tables recording augmented and non-augmented (local) delay information respectively when applying the dynamic programming search. The search is done with the first table, which records those transitions based on the augmented delay (\ref{aug}). The schedule is generated by the first table. On the other hand, the second table maintains the non-augmented delay $d_{local}(c_{p,k}) = |c_{p,k}| \cdot \big(ast - arr(c_{p,k})\big)$ when the search is running and uses it to calculate congestion feedback

\begin{equation}
\hat{d}[C_{p,j}(t-1)] =\frac{ \sum_{c_{p,k}\in C_{p,j}(t-1)} d_{local}(c_{p,k})}{\sum_{c_{p,k}\in C_{p,j}(t-1)} |c_{p,k}|}.
\label{local_congestion_feedback}
\end{equation}

 By applying these two tables, intersection can determine the congestion feedback from second table and share them with neighbors, so that it can prevent outdated information from flowing within the network.  This approach to managing outdated information is applied by default in our evaluation.
 
 \begin{figure}[!htbp]
\centering
\includegraphics[scale = 0.3]{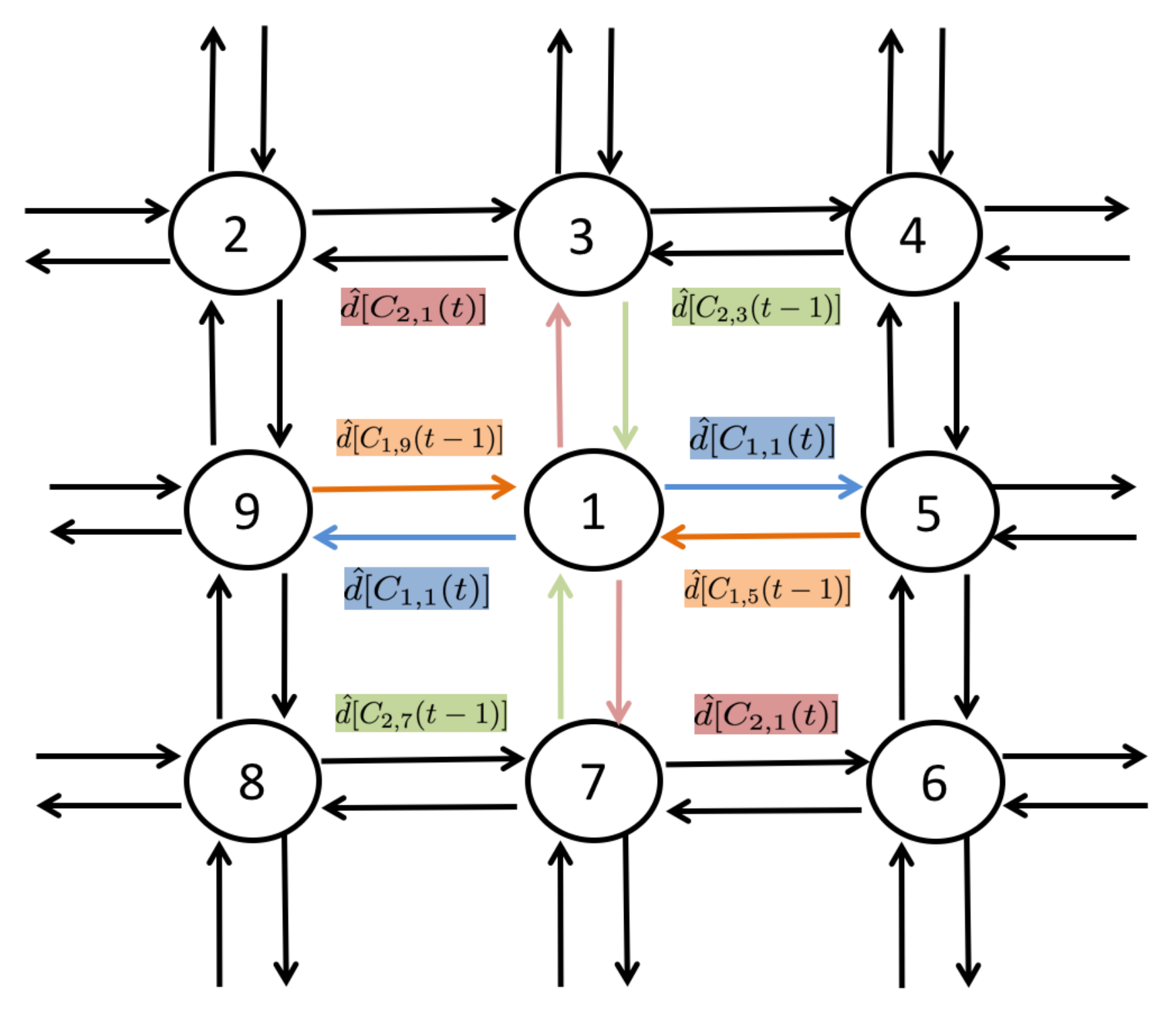}
\caption{Exchange congestion feedback with neighbor intersections: intersection $3$ and $7$ belong to phase $2$ of intersection $1$, and intersection $5$ and $9$ belong to phase $1$. Each intersection sends and receives congestion feedback to and from neighbor intersections.}
\label{cf_graph}
\end{figure}

\subsection{Bottleneck Prevention}
Considering the case where the congestion level (or loading) of all downstream neighbors is lower than the local loading, the primary task of the local agent should be evacuating the approaching vehicles as soon as possible. Otherwise the local traffic could possibly reach the physical road capacity. Alternatively, if the local agent has lower congestion than one or more of its downstream neighbors, then its priority should be to slow down traffic evacuation in the appropriate direction (since the traffic will be delayed anyway). To deal with this case, we design a \textit{bottleneck criterion} (BC) based on the newly computed schedule and the latest received congestion feedback:
\begin{definition}[Bottleneck Criterion]
The intersection $i$ satisfying 
\begin{equation}
\hat{d}[C_i(t-1)] \cdot w_i + \epsilon \geq \hat{d}[C_j(t-1)] \cdot w_j ,\quad j\in \mathcal{N}_i
\end{equation}
is viewed as a bottleneck and may optimize cumulative non-augmented delay instead.
\end{definition}
$C_i(t-1)$ and $C_j(t-1)$ are all input clusters at intersection $i$ and $j$. $\epsilon$ is a parameter to make sure that local loading is sufficiently larger than that of downstream neighbors and $w_i$ is a weight being proportional to corresponding road capacity. Note that the congestion feedback used in this criterion is computed by $C_i(t-1)$ and $C_j(t-1)$, and reflects the aggregate traffic condition of all phases. If the criterion is satisfied, then intersection $i$ uses non-augmented delay to compute its schedule. In essence, it means that the agent returns to self-interested mode.

\subsection{Turning Movement Proportion}
Considering turning proportions at each intersection is crucial for improving performance of adaptive traffic signal systems. In the baseline schedule-driven approach, the turning movement proportion is estimated by taking moving averages of traffic flow rate for different phases respectively. The lane detectors detect the numbers of turning vehicles, compute the moving average and then normalize these flow rates. After getting these proportions, the scheduled flow is able to reflect the realistic traffic flow by proportioning the add-on flow and evacuated flow. For a grid-like network, the congestion feedback from three input links (e.g., east, north, and west) of the downstream intersections should be multiplied by the corresponding turning proportions and summed up together to obtain the effective congestion feedback to local input link (north). If $c_{p,k}$ is from intersection $u$ to intersection $i$, the effective congestion feedback can be defined as follows,
\begin{definition} [Effective Congestion Feedback]
Let $c_{p,k}$ be the $k$th cluster in the $C_{p,i}(t)$ of intersection $i$ from intersection $u$. Then
\begin{equation}
\tilde{d}(c_{p,k}) = \sum_{j\in \mathcal{N}_i\backslash u} \zeta_{u,j} \cdot \hat{d}[C_{P(i,j),j}(t-1)] 
\end{equation}
is the effective congestion feedback, where $\zeta_{u,j}$ is the turning proportions of input and output links between intersection $u$ and $j$ and $P(i,j)$ is corresponding phase of intersection $j$ to intersection $i$.
\end{definition}
The corresponding augmented delay is 
\begin{equation}
d(c_{p,k}) = |c_{p,k}| \cdot \big[(ast - arr(c_{p,k})) + \tilde{d}(c_{p,k})\big],
\label{turn_aug}
\end{equation}
In the following experimental evaluation, the effective congestion feedback is applied by default to deal with the uncertainty of turning movement.

\section{Experimental Evaluation}
In this section, we compare DCC algorithm to two other real-time traffic control methods. First, we take the performance of the original schedule-driven traffic control system \cite{Xie2012,xie2012schedule} as our baseline system. Second, we compare to a variant of cycle-based adaptive control that optimizes cycle time, phase split and timing offset of successive signals every cycle. The basic concept of cycle-based adaptive control is to calculate cycle time based on estimation of saturation flow rate \cite{webster1958traffic} and allocate green time according to flow ratio on each phase. A well known example of this type of adaptive control scheme is the SCATS system \cite{daizong2003comparative,wongpiromsarn2012distributed}. \footnote{Note also that previous research with the baseline schedule-driven approach has shown its comparative advantage over other online planning approaches  \cite{Xie2012,xie2012schedule}.}

To evaluate our approach, we simulate performance on a two-intersection model and a real world network. The two-intersection model is for studying how different traffic patterns (i.e., symmetric or asymmetric) affect performance. The real world network is for evaluating the performance of DCC in a larger complex real network.  The simulation model was developed in VISSIM, a commercial microscopic traffic simulation software package. We assume that each vehicle has its own route as it passes through the network and measure how long a vehicle must wait for its turn to pass through all intersections along its route (the delay). Tested traffic volume is averaged over sources at network boundaries. To assess the performance boost provided by the DCC, we measure the average waiting time of all vehicles over ten runs. All simulations run for $3.5$ hour of simulated time. Results for a given experiment are averaged across $10$ simulation runs with different random seeds.

\subsection{Two-Intersection Model}
We consider a simple model with two connected $2$-way intersections that have multiple lanes for each direction as controlled experiments. By changing external flow rates, two types of traffic scenarios are tested: 1) symmetric traffic  and 2) asymmetric traffic. In this simple model, there is only one connecting road segment. The maximum traffic volume is set to $2800$ cars/hour due to speed limit and road capacity. 

Figure~\ref{bal} (a) shows that DCC algorithm is able to handle high volume better than the benchmark. When the traffic volume increases, sending too much traffic to the connecting road segment will deteriorate the traffic condition.  If the congestion feedback is large, dynamic programming search will decrease the number of vehicles sent to the connecting road and reduce the cumulative delay. Under lighter traffic situations, the performance of both DCC and baseline are comparable since small congestion feedback is not strong enough to bias the schedule. Furthermore, DCC with the bottleneck criterion avoids the situation that both intersections produce suboptimal schedules to local traffic under light traffic conditions and thus achieve better performance.

In Figure~\ref{bal} (b), we can observe that DCC is especially useful when the traffic is asymmetric.  In this controlled experiment, we fix the traffic volume of one intersection (right intersection) and increase the volume of the other one (left intersection) step by step (from $0\%$ to $40\%$). DCC provides $20\%$ and $35\%$ delay reduction at most compared with the benchmark and the cycle-based adaptive control scheme. When traffic becomes heavier in one of those intersections, congestion feedback coordinates one intersection to send more vehicles and the other one to send less along the connecting road. Note also that performance of two DCC variants are comparable. 

It is interesting to note that the performance gain for asymmetric traffic is greater than the symmetric traffic. If the traffic pattern is symmetric, it becomes more difficult to differentiate the loading between neighbors. To improve the performance further, detailed schedule information may be required in addition to congestion feedback.


\begin{figure}[!htbp]
\centering
\subfigure[Symmetric traffic]{\includegraphics[width=41.5mm, height=33mm]{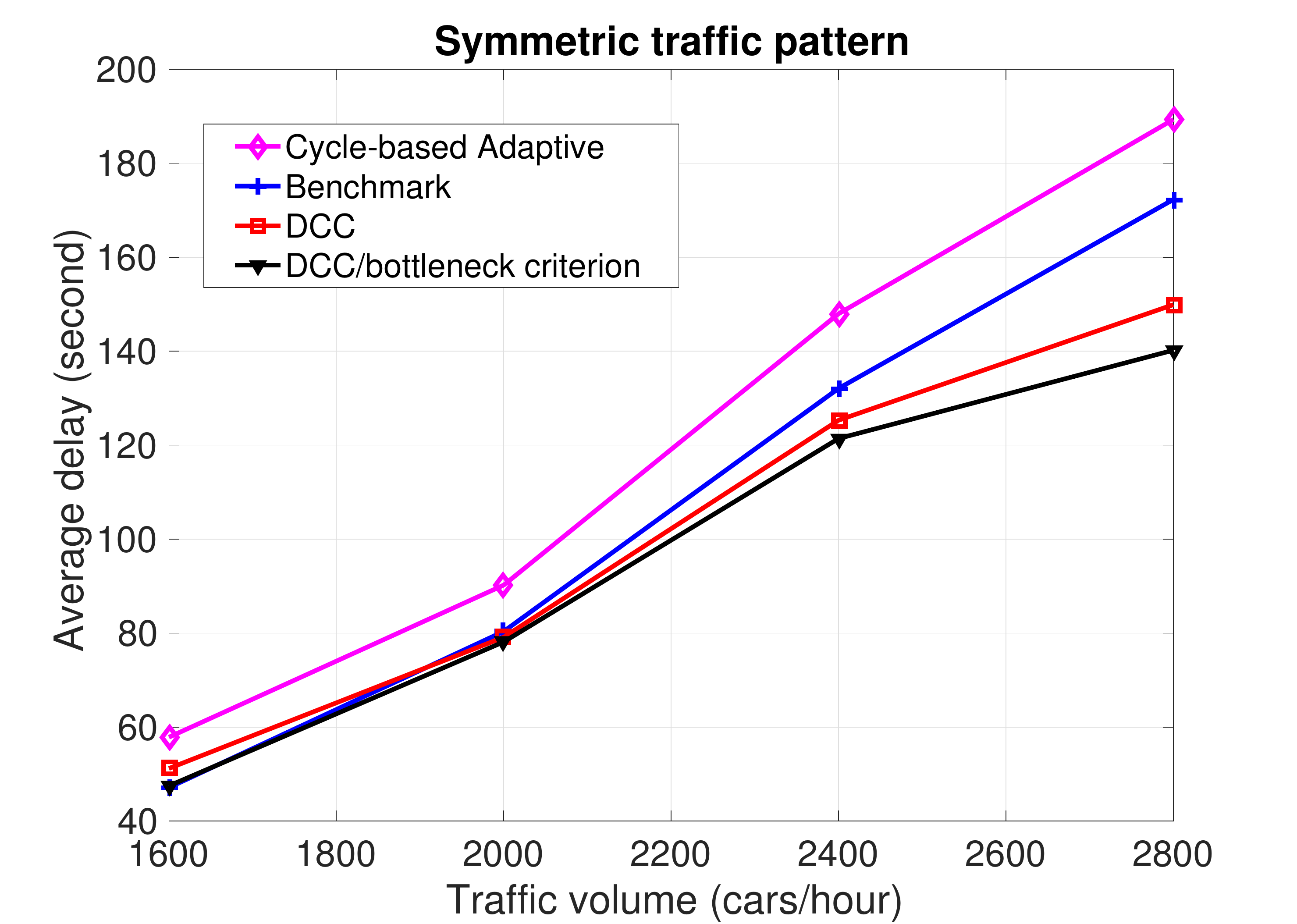}} 
\subfigure[Asymmetric traffic]{\includegraphics[width=41.5mm, height=33mm]{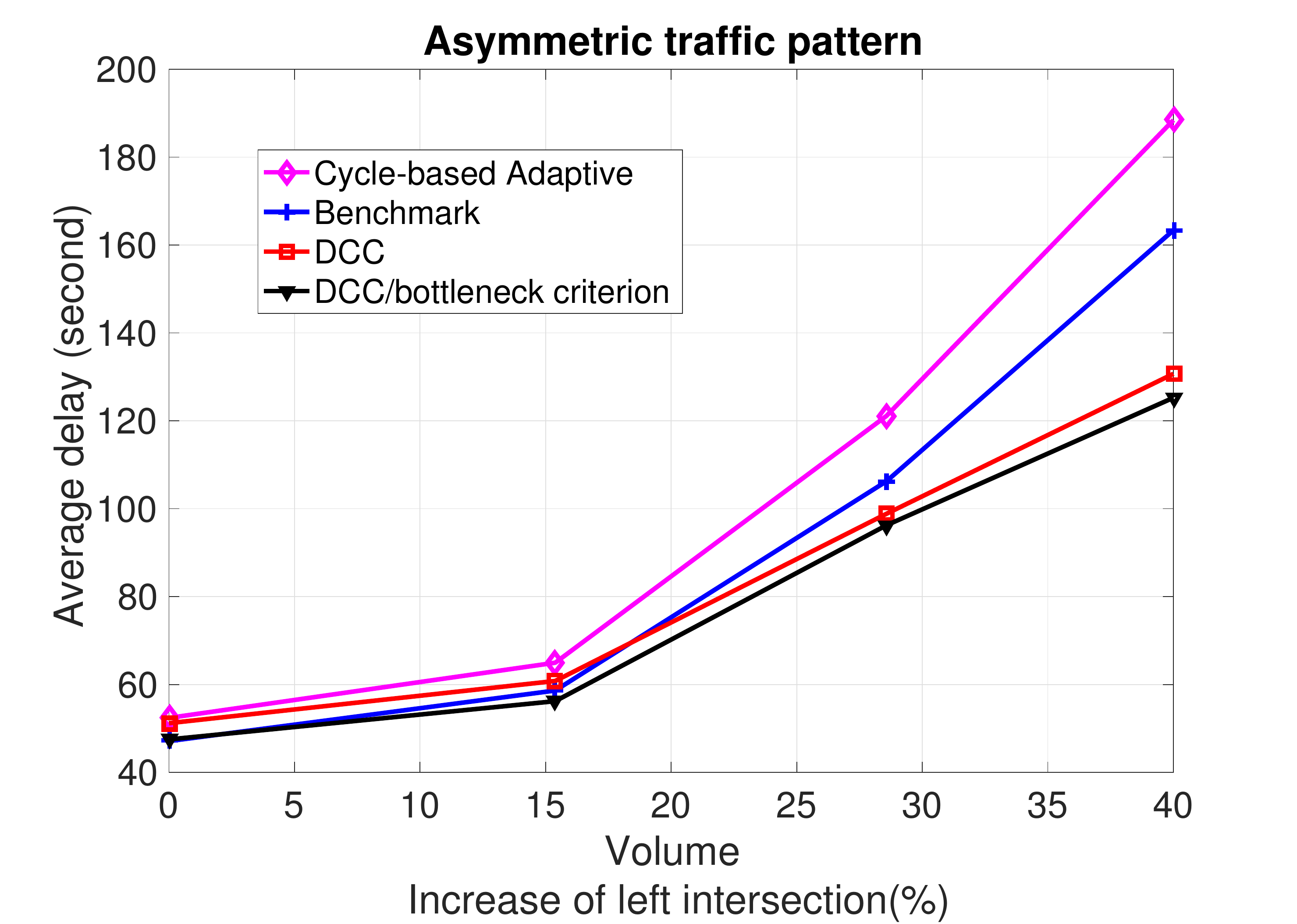}}
\caption{The delay of symmetric and asymmetric traffic pattern: (a) traffic of both intersections are increasing. (b) only traffic of left intersection is increasing}
\label{bal}
\end{figure}

%
%

\subsection{Urban Network Model}
The network model is based on the Baum-Centre neighborhood of Pittsburgh, Pennsylvania as shown in Figure~\ref{surtracplusmap}. The network consists of $24$ intersections that are mainly 2-phased. It can be seen as a two-way grid network. All simulation runs were carried out according to a realistic traffic pattern from late afternoon through "PM rush" (4-6 PM). The traffic pattern ramps up volumes over the simulation interval as follows:  (0-30mins: $472$ cars/hour, 30min-1hour: $708$ cars/hour, 1hour-2hours: $1056$ cars/hour ). This simulation model presents a complex practical application to verify the effectiveness of the proposed approach.

\begin{figure}[!htbp]
\centering
\includegraphics[scale = 0.20]{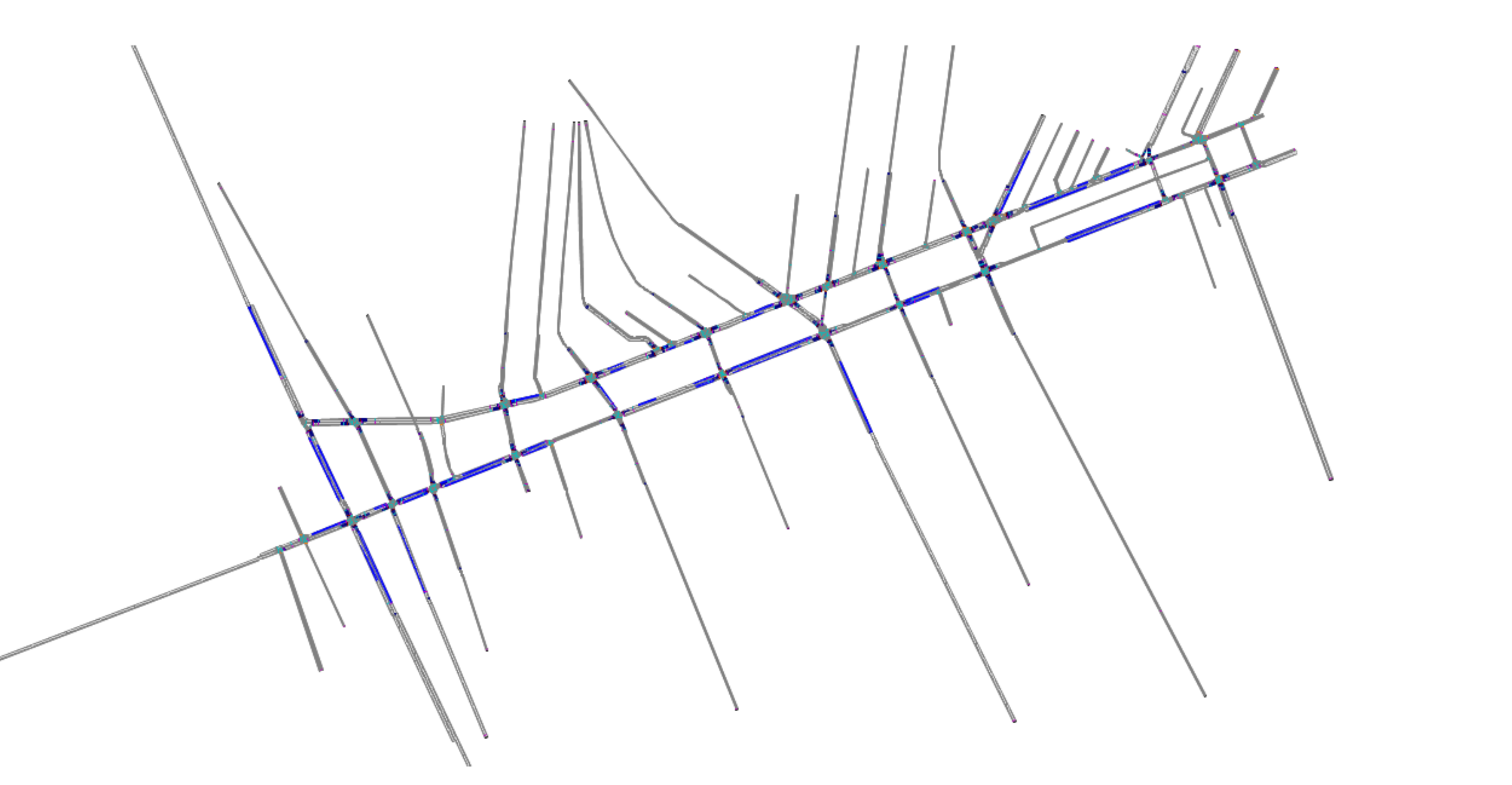}
\caption{Map of the 24 intersections in the Baum-Centre neighborhood of Pittsburgh, Pennsylvania}
\label{surtracplusmap}
\end{figure}

\begin{table}[!htbp]
\centering
  \scalebox{0.6}{
  \begin{tabular}{*{4}{c}}
   \toprule
      \multirow{2}{*}{}& \multicolumn{2}{c}{ Average Delay (second)}  \\
    
   \cmidrule(l){2-3}\cmidrule(l){4-4}    
   & mean  & std. & stop no. \\
    \midrule
 Benchmark & 147.00 & 177.94  & 8.27\\  
 Cycle-based Adaptive & 169.23 & 265.91  & 10.81\\  
DCC & 121.56 & 100.65 & 5.33\\ 
DCC w/ BC& 116.01 & 93.22  &5.32\\

    \bottomrule
  \end{tabular}
  }
   \caption{Summary of Baum Centre Model Results}
  \label{resultstable1}
\end{table}

Table~\ref{resultstable1} shows the results of DCC under PM rush, compared to cycle-based adaptive control approach and the baseline schedule-driven approach. In addition to DCC, we also compare DCC with the additional criterion (i.e., bottleneck criterion) for identifying bottleneck intersections.  As can be seen, delay is reduced by $17.7\%$ and $28.4\%$, compared to the schedule-driven and adaptive control approaches respectively. The use of congestion feedback reduces delay by coordinating intersections through exchanging schedule information. Furthermore, the use of the congestion feedback is beneficial for clearing queues of waiting vehicles and reducing the deleterious effects of spillback \cite{daganzo1998queue} by stopping vehicles further away from entry into a road segment with insufficient capacity. For the larger complex network, DCC with consideration of bottleneck intersections further improves performance by $21.08\%$ and $31.3\%$ respectively compared to the $2$ other approaches tested since it helps congested intersections to clear traffic as soon as possible. In addition to delay, number of stops is also compared. The number of stops of DCC is nearly half of cycle-based adaptive control.

\begin{table}[!htbp]
\centering
  \scalebox{0.6}{
  \begin{tabular}{*{9}{c}}
   \toprule
      \multirow{2}{*}{}& \multicolumn{8}{c}{ Average Delay (second)}  \\
    
   \cmidrule(l){2-9}    & \multicolumn{2}{c}{Benchmark} &  \multicolumn{2}{c}{DCC} &  \multicolumn{2}{c}{DCC w/BC}   &\multicolumn{2}{c}{Cycle-based Adaptive}   \\
   & mean  & std. & mean & std.  & mean & std. & mean &std. \\
    \midrule
 High demand& 212.14 & 361.41& 151.62&  77.13 & 148.62&  62.13 & 230.26 &279.19\\
   Medium demand & 84.22& 61.90& 82.56 &55.84& 78.23 &33.82& 86.46 &61.40 \\
   Low demand& 71.84 &54.25& 72.10 & 49.11 & 70.21 & 42.32 & 73.89 & 56.77 \\

    \bottomrule
  \end{tabular}
  }
   \caption{Average delay under different scenarios.}
  \label{demandtable}
\end{table}

To explore how DCC performs under different demand, we categorize traffic demand into three different groups: low (472 cars/hour), medium (708 cars/hour), and high (1056 cars/hour).  Table~\ref{demandtable} shows DCC to yield an improvement over the baseline approach of about $30\%$ and the cycle-based adaptive control of about $35\%$ for the high traffic demand case. For low and medium traffic, the average delay of three approaches are comparable.

\begin{figure}[!htbp]
\centering
\subfigure[CDF of delay]{\includegraphics[width=40mm, height=35mm]{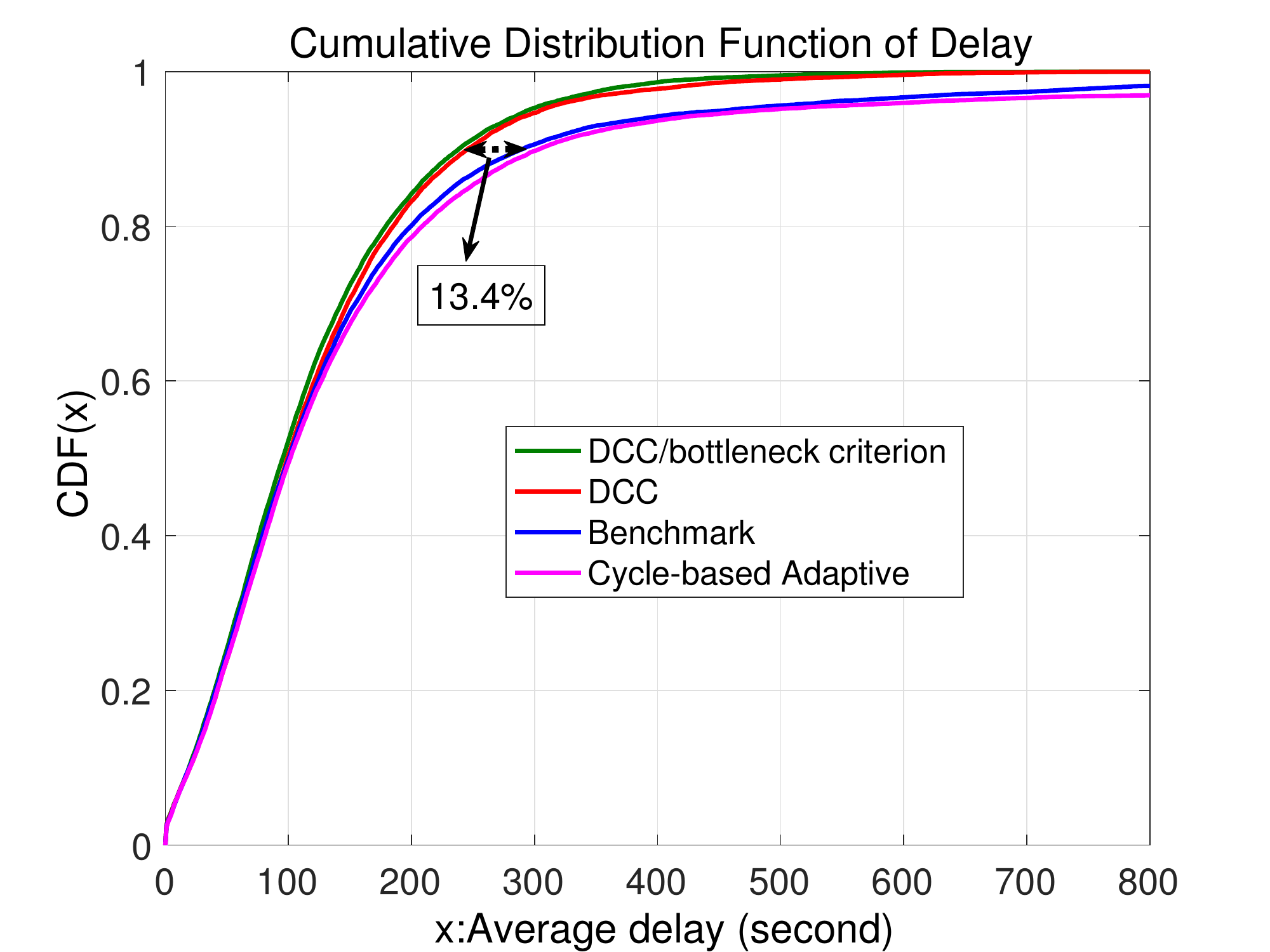}} 
\subfigure[Number of stops]{\includegraphics[width=40mm, height=35mm]{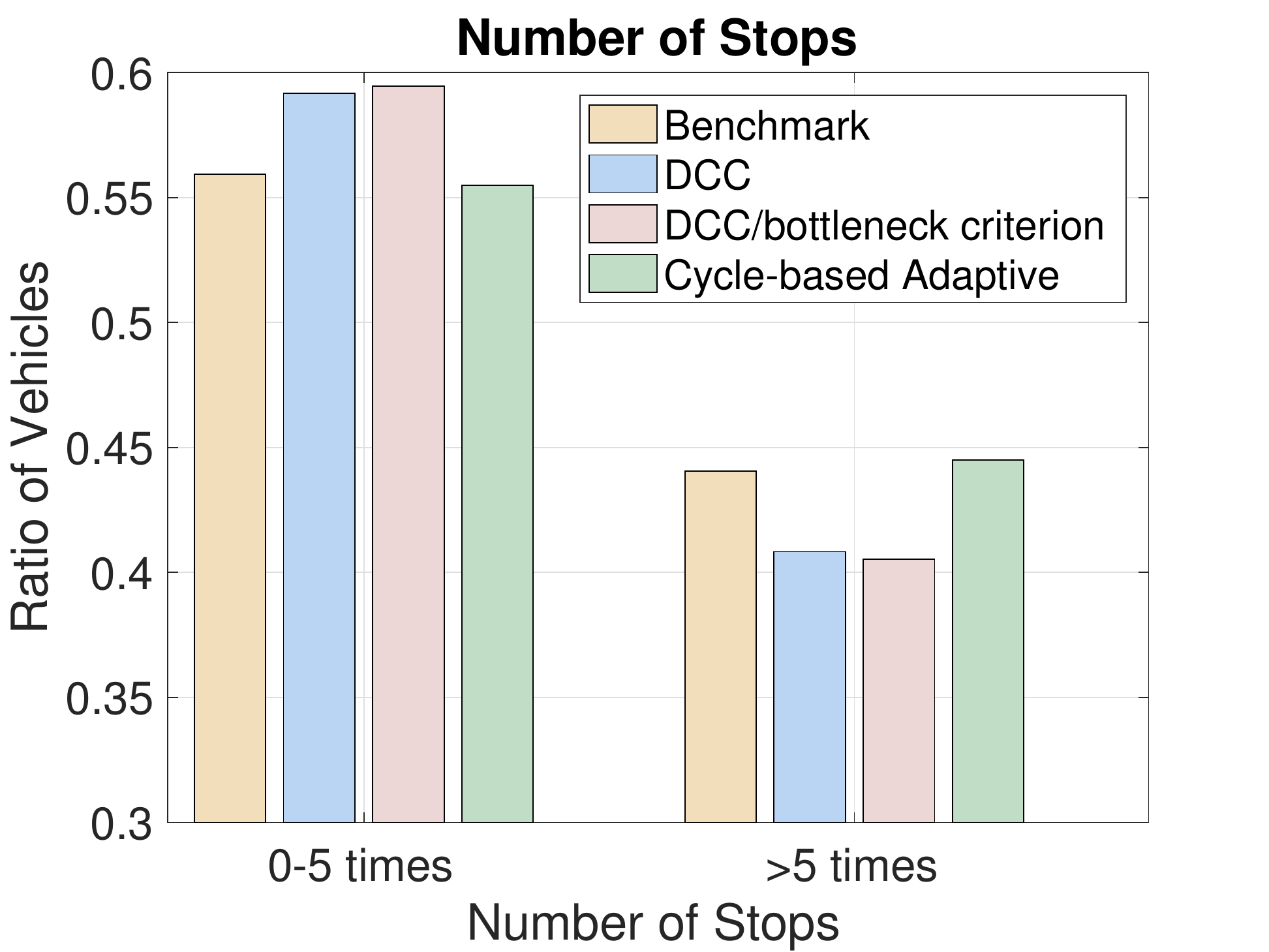}}
\caption{The cumulative distribution function of delay and number of stops.}
\label{cdf}
\end{figure}


Since traffic conditions are dynamically changing, knowing the distribution of delay to vehicles helps us verify the effectiveness of DCC algorithm. As shown in Figure~\ref{cdf} (a), using DCC shifts the cumulative distribution function (CDF) leftward and provides a $13.4\%$ improvement over the baseline approach for $90\%$ of the vehicles. Note also that while DCC reduces average delay by $40s$, the reduction is more than $100s$ for the congested vehicles. In other words, congestion feedback is especially effective for high congestion scenarios. In comparison to adaptive control, DCC provides a $17.6\%$ delay reduction for $90\%$ of the vehicles. As we compare the number of stops among four approaches in the Figure~\ref{cdf} (b), both DCC approaches have fewer vehicles that stop over $5$ times than do the benchmark and adaptive control.


\section{Conclusion}
In this work, we considered the limitations of  prior approaches to schedule-driven traffic control that rely on local optimization without regard to potential consequences due to congestion downstream in the network. A distributed algorithm was proposed to achieve better network-level performance in circumstances of downstream congestion. In this algorithm, agents compute and communicate their expected delay, referred to as congestion feedback, to upstream neighbors in addition to considering the outflow information that is sent downstream. Receiving agents adjust their schedules (and hence their planned outflows) according to this feedback. This delay feedback is computed by interpreting the intersection's generated schedule in an intuitive way and is integrated into the original combinatorial optimization as a form of multi-hop delay. 
Performance was evaluated on two simulation models, a simple two-intersection model and a real-world traffic signal control problem. Results showed that the new bi-directional information exchange model improves average delay overall in comparison to both the baseline schedule-driven traffic control approach and a cycle-based adaptive traffic signal control approach, and that solutions provide substantial gain in highly congested scenarios. Future work will focus on improving the accuracy of feedback based on pricing techniques and negotiation for approaching the optimality of network-wide scheduling. 

\section*{Acknowledgements}This research was funded in part by the University Transportation Center on Technologies for Safe and Efficient Transportation at Carnegie Mellon University and the CMU Robotics Institute.


\bibliographystyle{aaai}
\bibliography{price}  

\end{document}